\documentclass[lettersize,journal]{IEEEtran}
\usepackage{amsmath,amsfonts}
\usepackage{algorithmic}
\usepackage{algorithm}
\usepackage{array}
\usepackage[caption=false,font=normalsize,labelfont=sf,textfont=sf]{subfig}
\usepackage{textcomp}
\usepackage{stfloats}
\usepackage{url}
\usepackage{verbatim}
\usepackage{graphicx}
\usepackage{cite}
\usepackage{bm}
\usepackage{booktabs}
\usepackage{multicol}
\usepackage{multirow}
\usepackage{makecell}
\usepackage{color}
\newtheorem{theorem}{Theorem}

\begin{document}

\title{Learning-Accelerated Narrow-Phase Collision Detection via Check Ordering for \\ Sampling-Based Motion Planning}
\author{
	Hao Jiang, Yinghan Wang, Jianping He, Xiaoming Duan
	\thanks{The authors are with the Department of Automation, Shanghai Jiao Tong University, and Key Laboratory of System Control and Information Processing, Ministry of Education of China, Shanghai, China. Email: \small{\{mouse826612011, wyhboos, jphe, xduan\}@sjtu.edu.cn}.} }

\markboth{Journal of \LaTeX\ Class Files,~Vol.~14, No.~8, August~2021}%
{Shell \MakeLowercase{\textit{et al.}}: A Sample Article Using IEEEtran.cls for IEEE Journals}

\IEEEpubid{0000--0000/00\$00.00~\copyright~2021 IEEE}

\maketitle

\begin{abstract}
Collision detection is critical for ensuring the safety of planned paths in sampling-based motion planning.
However, it imposes a non-negligible computational burden on motion planners, motivating extensive studies on collision-detection acceleration. 
In commonly used phase-based collision-detection methods, the broad phase employs hierarchical structures to rapidly discard object pairs that are clearly collision-free, while the subsequent narrow phase performs detailed collision checks on the remaining object pairs whose collision status cannot be determined by  the broad phase.
Although these methods effectively reduce the number of detailed checks through broad-phase pruning, the narrow phase is usually executed in the default order returned by the broad phase, with little explicit optimization of the check order. 
This leaves room for further acceleration, especially in cluttered environments where many object pairs may remain after the broad phase and the narrow phase can account for a significant portion of the total detection time. In this work, we propose a learning-based method to accelerate phase-based collision detection by optimizing the check order in the narrow phase.
We first formulate the expected time cost of the narrow phase and derive an optimal check-ordering criterion that minimizes this expectation.
Since the priors required by this criterion are difficult to obtain in advance, we design a hypernetwork-based model to predict collision probabilities, which are then used to approximate the optimal check order. The resulting order guides the execution of exact mesh checks in the narrow phase, thereby reducing detection time without replacing the underlying geometric collision checker. Simulation results show that our method effectively accelerates phase-based collision detection and improves the efficiency and success rate of sampling-based motion planning, especially in cluttered environments.
\end{abstract}

\begin{IEEEkeywords}
Sampling-based motion planning, phase-based collision detection, narrow-phase detection, optimal check order, hypernetwork-based model.
\end{IEEEkeywords}

\section{Introduction}
\label{sec:intro}
Sampling-based motion planning constructs feasible paths by sampling the robot's configuration space and connecting collision-free configurations~\cite{2014_elbanhawi_survey_motion_planning}.
Among its key processes, collision detection determines whether the robot collides with surrounding obstacles at each sampled configuration, serving as a basis for ensuring the safety of the generated paths. 
In typical geometric collision detection, the surface geometry of each object is represented as a mesh composed of numerous triangles. This applies to both robot components and environmental obstacles, where each triangle defines a small region of the object's surface.
Therefore, a direct collision check between two objects, referred to as a mesh check, involves a large number of  triangle-triangle intersection computations~\cite{jimenez20013d}, making it computationally expensive.
Since each sampled configuration may require checking many object pairs, collision detection often accounts for a substantial portion of the total computational cost in sampling-based motion planning~\cite{2012_RCIM_Duguleana_Obstacle_avoidance}.
Consequently, numerous studies have been conducted to accelerate collision detection~\cite{wu2024gpu}, and several studies have shown that such acceleration can improve the performance of sampling-based motion planners~\cite{2016_ICRA_Huh_mixture_models_for_fast_collision_detection}~\cite{2022_CoRL_Lai_parallelised_diffeomorphic}.

Among these studies, phase-based collision detection is one of the most widely used acceleration paradigms, including methods based on Bounding Volume Hierarchies (BVH)~\cite{klosowski1998efficient}, Octrees~\cite{jung1997octree}, and Spatial Hashing~\cite{2003_teschner_spatial_hashing}.
These methods divide the detection into two phases: the broad phase and the narrow phase.
In the broad phase, coarse geometric representations, such as bounding volumes or spatial partitions, are used to perform efficient overlap tests and quickly discard object pairs that are clearly collision-free.  
Then, in the subsequent narrow phase, mesh checks are performed only on the remaining object pairs whose collision status cannot be determined by the broad phase. 
Therefore, phase-based methods achieve acceleration primarily through broad-phase pruning, and their differences mainly lie in the coarse representations and exclusion strategies used in the broad phase.
 
 \IEEEpubidadjcol

Compared with the broad phase, the narrow phase has received relatively limited attention in existing phase-based methods. In most implementations, mesh checks are performed on the remaining object pairs from the broad phase one by one, following the default order returned by the broad-phase traversal, without explicit check-order optimization. 
In cluttered environments, the robot is often in close proximity to obstacles, which limits the ability of the coarse broad phase to exclude object pairs. 
As a result, many object pairs still need to be processed during the narrow phase.
Due to the high computational complexity of the mesh check, the narrow phase can account for a significant portion of the overall detection time, as shown in Fig.~\ref{fig:research_content}. 
This indicates that optimizing the narrow phase can provide further acceleration for phase-based collision detection.

 \begin{figure*}[!t]
	\centering
	\includegraphics[width=6.0in]{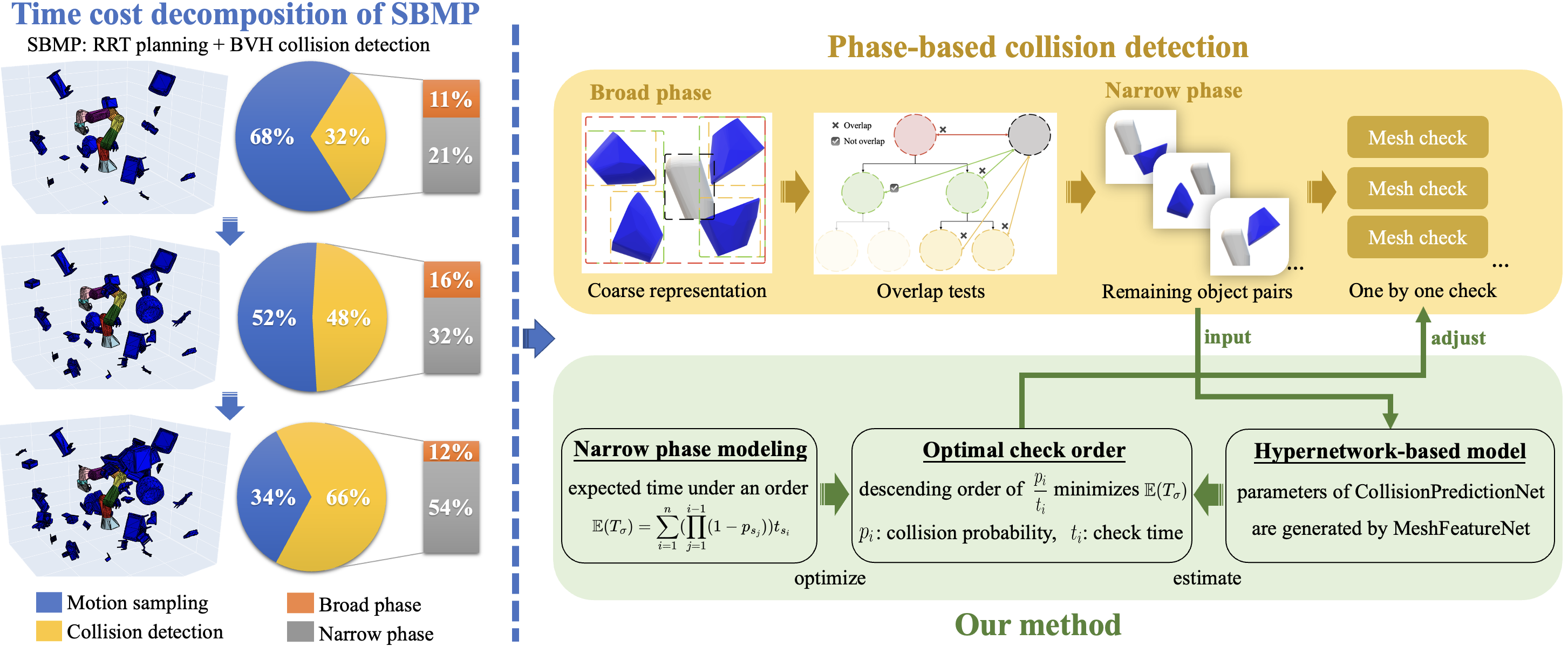}
	\caption{
		Overview of this research. 
		As the environment becomes increasingly cluttered, collision detection accounts for a growing proportion of the total computational cost in sampling-based motion planning, with the narrow phase becoming a major bottleneck.  
		To address this issue, we propose a learning-based method to accelerate the narrow phase by optimizing its check order.  Specifically, a hypernetwork-based model predicts collision probabilities for object pairs, which are used to approximate the optimal check order and enable earlier termination in collision cases, thereby improving the efficiency of the overall detection process.
	}
	\label{fig:research_content}
\end{figure*}

Therefore, in this work, we aim to accelerate the narrow phase by adjusting the order of mesh checks, so that object pairs more likely to collide are examined earlier. 
This enables earlier termination once a collision is detected, thereby reducing the expected detection time. 
However, constructing an effective check order requires prior information about the object pairs, such as their collision probabilities, which is difficult to obtain in advance. 
To address this, we propose a hypernetwork-based learning model to predict the required priors and use them to guide narrow-phase check ordering. Importantly, the proposed method does not replace exact mesh checks with learning-based predictions; instead, it only reorders exact checks, preserving the reliability of the underlying geometric collision detector. In summary, our main contributions are:
\begin{itemize}
    \item We propose to accelerate the phase-based collision detection by optimizing the check order in the narrow phase. By modeling the expected time cost of the narrow phase, we derive an optimal check-ordering criterion that minimizes this expectation. 
    \item We design a hypernetwork-based model to predict the priors required  for constructing an approximate optimal check order, with an emphasis on generalization and low computational complexity. 
    \item Simulation results show that the proposed method accelerates phase-based collision detection in cluttered environments, thereby improving both the efficiency and success rate of sampling-based motion planners.
\end{itemize}

\section{Background}
Collision detection determines whether the meshes of robot components intersect  with those of environmental obstacles. A straightforward implementation is to perform mesh checks on all object pairs. However, since mesh checks are computationally expensive, collision detection can become time-consuming, motivating extensive research on acceleration methods. 
Existing approaches can be broadly categorized into phase-based methods, which reduce the number of mesh checks, and learning-based methods, which reduce the computational cost of individual checks.

Phase-based methods divide collision detection into two phases: the broad phase and the narrow phase. Acceleration is primarily achieved in the broad phase, where coarse geometric representations are used to enclose objects and enable simple overlap tests. Object pairs that clearly do not collide can then be quickly discarded without performing detailed mesh checks. 
For example, BVH~\cite{klosowski1998efficient}  organizes objects or geometric primitives into a hierarchy of bounding volumes. Since these volumes are usually  simple shapes, such as boxes or spheres, their overlap can be tested much more efficiently than mesh intersections. If two bounding volumes do not overlap, the object pairs contained in them can be directly excluded from further mesh checks.
Similarly, Octrees~\cite{jung1997octree} recursively divide the environment into octants and construct a hierarchical spatial index, thereby reducing the number of object pairs requiring mesh checks by excluding non-intersecting nodes. 
Spatial Hashing~\cite{2003_teschner_spatial_hashing} partitions the workspace into grid  cells and checks only object pairs mapped to the same or neighboring cells, efficiently filtering out pairs that are guaranteed to be collision-free. 
In addition, some studies have explored parallel or hardware-friendly implementations of phase-based collision detection~\cite{ramsey2024collision}. 
Although these methods differ in their broad-phase representations and exclusion strategies, their subsequent narrow phase is usually similar: mesh checks are performed on the remaining object pairs in the order returned by the broad phase, without explicit optimization of the check order.

In contrast, learning-based methods aim to approximate mesh-check results using data-driven models, achieving faster predictions at the cost of possible prediction errors. 
Early studies used multilayer perceptrons~\cite{2002_ICSMC_Garc_box_collision_detection} and Support Vector Machines (SVMs)~\cite{pan2015efficient} to learn collision boundaries, but these methods were mainly limited to relatively simple objects and offered limited advantages in prediction efficiency.
To improve efficiency, Pan et al.~\cite{pan2016fast} proposed a method that stores prior collision results and uses them to predict collisions for new robot poses, based on the observation that nearby poses may share similar collision states.  Das et al.~\cite{2020_TRO_Das_fastron} learned a collision proxy using a perceptron classifier with a computationally efficient kernel, and further showed  that a forward-kinematics kernel is more suitable for measuring the similarity between two robot poses~\cite{2020_RAL_Das_fastron_fk}. 
Verghese et al.~\cite{2020_RAL_Verghese_fastron_decomposition} and Han et al.~\cite{2020_IROS_Han_configuration_space_decomposition} further decomposed the configuration space into multiple subspaces, each associated with a separate predictor, thereby improving prediction accuracy and reducing prediction time. 
These methods mainly focus on the trade-off between prediction accuracy and computational complexity. However, they are typically designed for fixed environments and may suffer from limited generalization. Moreover, when learning-based predictions are used to replace exact mesh checks, false predictions may waste computational resources or even compromise path safety. Therefore, deterministic phase-based methods remain widely used in motion planning systems. Although learning-based predictors could be applied to the narrow phase, directly replacing mesh checks still introduces the risks of incorrect predictions and limited generalization.

To further accelerate phase-based collision detection while preserving deterministic collision-checking results, this paper focuses on optimizing the check order in the narrow phase. Some heuristic strategies, such as the surface-area-based~\cite{goldsmith1987automatic} and 
distance-based~\cite{gunther2007realtime} rules, use geometric features as indirect proxies for collision likelihood and can be used to adjust the check order. However, these heuristics do not explicitly model the expected time cost of the narrow phase or provide an optimality criterion for check ordering. 
Therefore, we formulate the expected narrow-phase time cost and derive an optimal check-ordering criterion. Instead of replacing exact mesh checks, the proposed hypernetwork-based model predicts the priors required to approximate this criterion and uses them only to reorder mesh checks in the narrow phase.

\section{Narrow-Phase Optimization in \\ Phase-based Collision Detection}
\label{sec:definition}
In this section, we first introduce the principle of phase-based collision detection using BVH as an example, and discuss the potential for optimization in the narrow phase.
Then, we formulate a time cost model for the narrow phase and derive an optimal check-ordering criterion that minimizes the expected time cost. 
Next, we propose a hypernetwork-based model to learn and predict the priors required to approximate the optimal check order.
Finally, we summarize the proposed learning-accelerated phase-based collision detection method for sampling-based motion planning.

\subsection{Phase-based Collision Detection}
\label{sec:3_1}

\begin{figure*}[t!]
	\centering
	\subfloat[Robot model]{\includegraphics[width=1.5in]{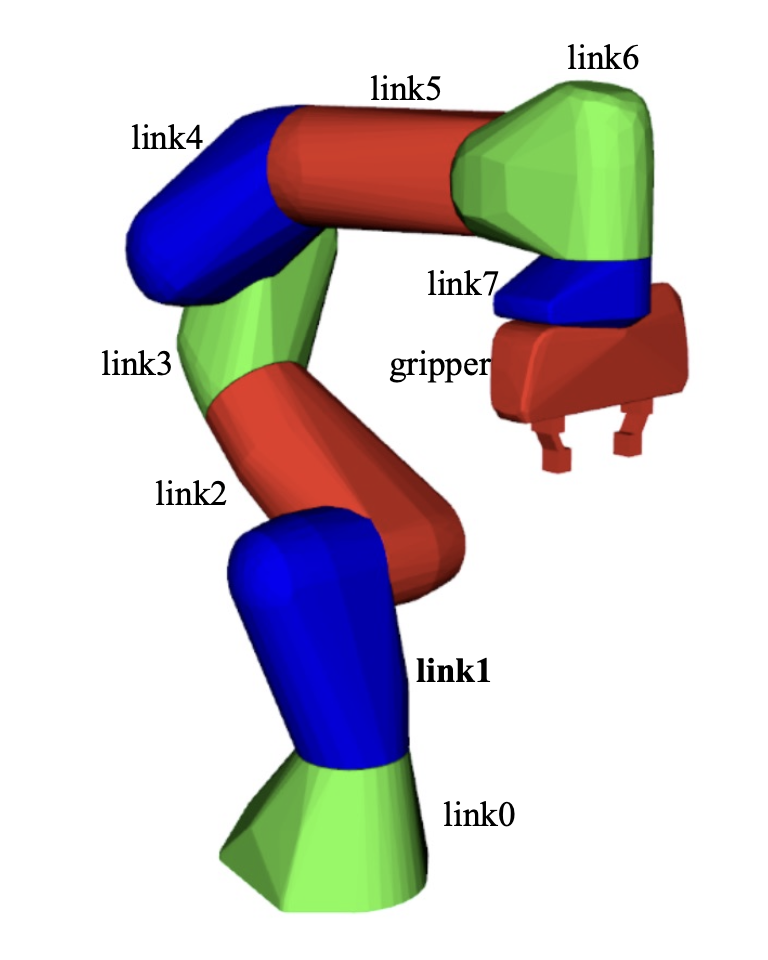}
		\label{fig:panda_arm}}
	\subfloat[Bounding boxes]{\includegraphics[width=2.3in]{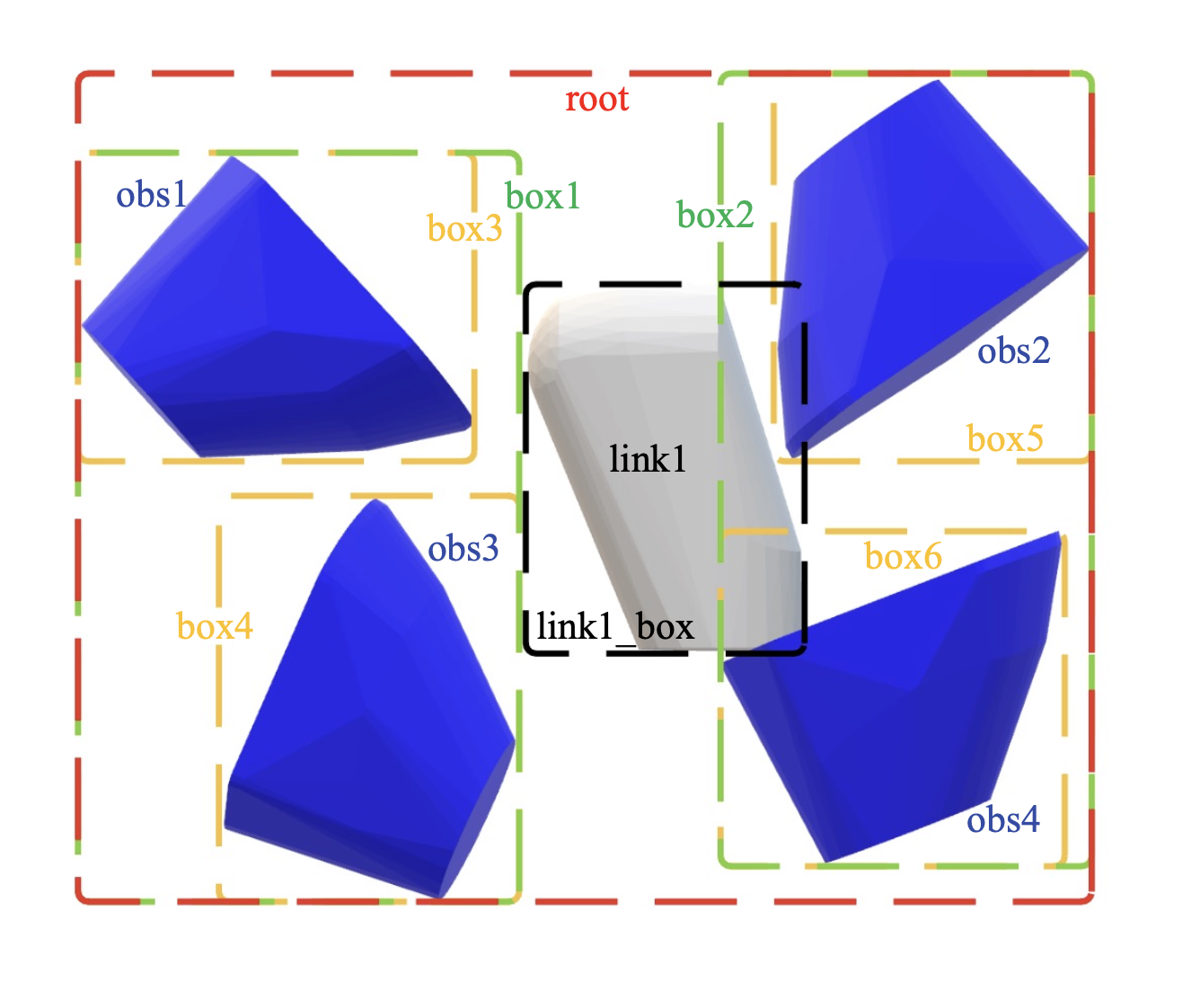}
		\label{fig:bounding}}
	\subfloat[Phases in BVH]{\includegraphics[width=3.0in]{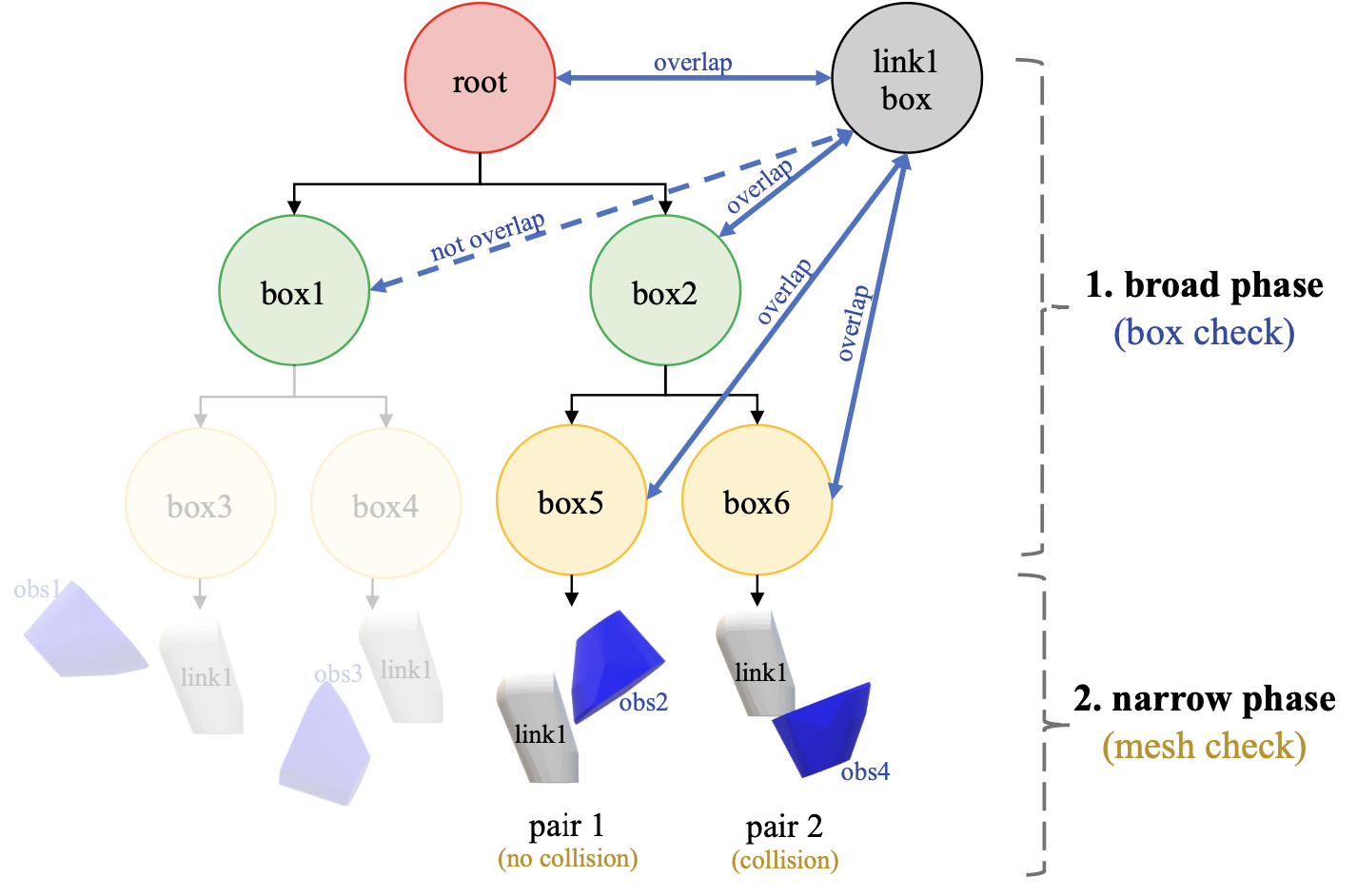}
		\label{fig:bvh}}
	\caption{Collision detection based on BVH.
		(a) The Franka Emika Panda robot model, which consists of nine components.
		(b) An environment with four obstacles, where AABBs are used as the bounding volumes to construct the hierarchy. A root box encloses all obstacles; box1 and box2 enclose the left and right obstacle groups,  respectively; and box3 to box6 enclose individual obstacles. 
		(c) The broad phase forms a BVH tree and performs box checks along the tree. If a parent node, such as box1, does not overlap with link1\_box, then its child nodes, box3 and box4, can be excluded. 
		In the subsequent narrow phase,  leaf nodes that still overlap with link1\_box, such as box5 and box6, are checked by mesh checks on the corresponding object pairs, pair1 and pair2, to determine whether an actual collision occurs.
	}
	\label{fig:bvh_principle}
\end{figure*}

Phase-based methods divide the collision detection process into the broad phase and the narrow phase.
The basic idea is to use bounding volumes and hierarchical structures in the broad phase to quickly exclude object pairs that are clearly collision-free, thereby reducing the number of remaining object pairs that require detailed mesh checks in the narrow phase. In this way, phase-based methods avoid performing mesh checks on all object pairs and thus accelerate collision detection. 
We use the widely adopted BVH as an example, as shown in Fig.~\ref{fig:bvh_principle}. Suppose  BVH is applied to detect collisions between the robot component link1 and four obstacles.

\begin{itemize}
	\item Broad phase: As illustrated in Fig.~\ref{fig:bounding}, bounding volumes are constructed for link1 and each obstacle, and a hierarchy, i.e., a BVH tree, is built for the bounding volumes of all obstacles.
	Since bounding volumes are usually simple shapes, such as the Axis-Aligned Bounding Boxes (AABBs) used in Fig.~\ref{fig:bounding}, testing whether two boxes overlap, referred to as a box check, is much simpler than performing a mesh check. Moreover, if two boxes do not overlap, the objects enclosed by them cannot collide. 
	Therefore, based on the BVH tree in Fig.~\ref{fig:bvh}, if a parent box does not overlap with the box of link1, its child boxes can be directly excluded. Otherwise, box checks are further performed for its child boxes until the leaf nodes of the tree are reached. 
	
	\item Narrow phase:  Since the bounding-volume hierarchy used in the broad phase is coarse, an overlap between a leaf node in the BVH tree and the box of link1 does not necessarily indicate an actual collision. As a result, in the narrow phase shown in Fig.~\ref{fig:bvh}, mesh checks are  still required for the object pairs  remaining from the broad phase, because their collision status cannot be determined by the box check alone. 
	The default check order in the narrow phase is determined by the order in which the remaining object pairs are generated during broad-phase traversal, and this order is not explicitly optimized for narrow-phase collision checking.
\end{itemize}

Existing phase-based methods mainly differ in the hierarchical structures, bounding volumes, and exclusion strategies used in the broad phase.  However, the remaining object pairs  after the broad phase are typically  checked one by one using mesh checks in  the narrow phase.
This leaves room for further acceleration, especially in cluttered environments. In such environments, the robot may be close to obstacles, limiting the ability of the broad phase to exclude object pairs and leaving many pairs to be processed in the narrow phase. 
Since mesh checks are much more expensive than box checks, the narrow phase can account for a large portion of the overall detection time.
Therefore, phase-based collision detection can be further improved by optimizing the narrow phase.

\subsection{Narrow Phase Modeling and Optimal Check Order}
\label{sec:3_2}
In the narrow phase of phase-based collision detection, mesh checks are  performed on object pairs whose collision status cannot be determined by  the broad phase.
Since collision detection only needs to determine whether at least one collision occurs, the narrow phase terminates either after all object pairs are verified as collision-free or immediately once a collision is detected.
Therefore, to reduce the expected time cost of the narrow phase, object pairs with higher collision probabilities and lower check costs should be examined earlier. 
This enables earlier termination of the narrow phase. 
Building on this idea, we formulate the time cost of the narrow phase and derive an optimal check-ordering criterion.

Suppose there are $n \geq 2$ object pairs $\{m_1, m_2, \dots, m_n\}$ that require  mesh checks in the narrow phase. For each object pair $m_i$, let $p_i$ denote its collision probability and $t_i$ denote the time cost of its mesh check. A check order is denoted by a permutation $\bm{\sigma} = [s_1, s_2, \dots, s_n]$, where $s_i$ is the index of the object pair checked at the $i$-th position. 
Assuming that collision events of different object pairs are independent, the expected time cost $\mathbb{E}(T_{\bm{\sigma}})$ of the narrow phase under the order $\bm{\sigma}$ is
 \begin{align}\label{eq:expectation}
 	\begin{split}
 		\mathbb{E}(T_{\bm{\sigma}})=&t_{s_1}+(1-p_{s_1})t_{s_2} + (1-p_{s_1})(1-p_{s_2})t_{s_3} \\
 		&+\cdots+(1-p_{s_1})\cdots(1-p_{s_{n-1}})t_{s_n}\\
 		=&\sum_{i=1}^{n} (\prod_{j=1}^{i-1} (1-p_{s_j}))t_{s_i}.    
 	\end{split}
 \end{align}
In the following, we show that the expected time cost is minimized by sorting object pairs in nonincreasing order of $\frac{p_i}{t_i}$.
 
 \begin{theorem}
 	\label{theorem1}
    Assume that collision events of different object pairs are independent. An optimal check order that minimizes the expected time cost $\mathbb{E}(T_{\bm{\sigma}})$ in~\eqref{eq:expectation} is given by
	$\bm{\sigma}^o=[s^o_1,s^o_2,\dots,s^o_n]$, where for any $i<j$, $\frac{p_{s^o_i}}{t_{s^o_i}}\geq\frac{p_{s^o_j}}{t_{s^o_j}}$.
 \end{theorem}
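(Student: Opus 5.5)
The proof will use a classical adjacent-interchange (exchange) argument, in the style of Smith's rule for scheduling. Since the index set is finite, an optimal permutation exists. It therefore suffices to show two things: any order that is not sorted by $p_i/t_i$ can be transformed, by swapping adjacent elements, into the sorted order $\bm{\sigma}^o$ without ever increasing $\mathbb{E}(T_{\bm{\sigma}})$; and all orders sorted in nonincreasing $p_i/t_i$ (which differ only among ties) have the same expected cost. We assume $t_i>0$ throughout, so the ratios are well defined, and $p_i\in[0,1]$.

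The key computation compares $\bm{\sigma}$ with the order $\bm{\sigma}'$ obtained by swapping the entries at positions $k$ and $k+1$, say $a=s_k$ and $b=s_{k+1}$. Write $P=\prod_{j=1}^{k-1}(1-p_{s_j})$ for the survival probability before position $k$. In~\eqref{eq:expectation}, the terms for positions $i<k$ are unchanged by the swap. The terms for positions $i>k+1$ are also unchanged, because the product $\prod_{j\le i-1}(1-p_{s_j})$ contains both $(1-p_a)$ and $(1-p_b)$ and is symmetric in them. Only the two middle terms differ, so
\begin{align*}
\mathbb{E}(T_{\bm{\sigma}})-\mathbb{E}(T_{\bm{\sigma}'}) &= P\big[t_a+(1-p_a)t_b\big]-P\big[t_b+(1-p_b)t_a\big] \\
&= P\,(p_b t_a - p_a t_b).
\end{align*}
Since $P\ge 0$ and $t_a,t_b>0$, this difference is nonnegative exactly when $p_b/t_b\ge p_a/t_a$. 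In words, moving the pair with the larger ratio earlier never increases the expected cost. When the ratios are equal the difference is zero, so swapping tied adjacent pairs leaves the cost unchanged.

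To conclude, take an optimal order $\bm{\sigma}^*$. Whenever it has an adjacent inversion, meaning $p_{s_k}/t_{s_k}<p_{s_{k+1}}/t_{s_{k+1}}$, swap that adjacent pair. By the computation above, the cost does not increase, so the new order is still optimal. Each such swap strictly reduces the number of inversions with respect to the ratio ordering, so after finitely many swaps we reach an order sorted in nonincreasing $p/t$, and it is still optimal. Any two sorted orders differ only by permutations within blocks of equal ratio. These permutations can be realized by adjacent swaps of tied elements, each of which has zero cost change, so every $\bm{\sigma}^o$ described in Theorem~\ref{theorem1} attains the minimum.

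The main obstacle is the bookkeeping in the swap computation: one must check carefully that the suffix terms are invariant, since this is where the independence assumption and the product form of~\eqref{eq:expectation} are used. The edge case $P=0$ (some earlier $p=1$) also needs attention, but it gives equality and so causes no trouble.
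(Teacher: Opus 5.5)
Your proposal is correct and follows the paper's own proof: the same adjacent-interchange argument, with the same swap difference $\prod_{j=1}^{l-1}(1-p_{s_j})\,(p_{s_{l+1}}t_{s_l}-p_{s_l}t_{s_{l+1}})$, and then swapping adjacent out-of-order pairs until the order is sorted. Your version is slightly more careful than the paper's. Because you start from an optimal order that exists by finiteness and handle ties and the $P=0$ case explicitly, you show that every sorted order attains the minimum, which the paper's ``if and only if'' claim (false when some earlier $p=1$) and its ``swap until optimal'' step leave implicit.
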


\textit{Proof}: 
Consider an arbitrary check order  $\bm{\sigma}^a=[s_1,\dots,s_l,s_{l+1}\dots,s_n]$ and another order $\bm{\sigma}^b=[s_1,\dots,s_{l+1},s_{l}\dots,s_n]$ obtained by swapping the adjacent elements $s_{l}$ and $s_{l+1}$ of $\bm{\sigma}^a$, where $l\in\{1,2,\dots,n-1\}$. 

According to \eqref{eq:expectation}, the expected time costs of the narrow phase under $\bm{\sigma}^a$ and $\bm{\sigma}^b$ are
\begin{equation}
	\label{eq:proof_3}
	\begin{aligned}
		\mathbb{E}(T_{\bm{\sigma}^a})&=\sum_{i=1}^{l-1} (\prod_{j=1}^{i-1} (1-p_{s_j}))t_{s_i}  \\
		&\quad+\prod_{j=1}^{l-1} (1-p_{s_j})(t_{s_l} + (1-p_{s_l})t_{s_{l+1}})\\
		&\quad +\sum_{i=l+2}^{n} (\prod_{j=1}^{i-1} (1-p_{s_j}))t_{s_i},
	\end{aligned} 
\end{equation}
and
\begin{equation}
	\label{eq:proof_4}
	\begin{aligned}
		\mathbb{E}(T_{\bm{\sigma}^b})&=\sum_{i=1}^{l-1} (\prod_{j=1}^{i-1} (1-p_{s_j}))t_{s_i}  \\
		&\quad +\prod_{j=1}^{l-1} (1-p_{s_j})(t_{s_{l+1}} + (1-p_{s_{l+1}})t_{s_{l}})\\
		&\quad +\sum_{i=l+2}^{n} (\prod_{j=1}^{i-1} (1-p_{s_j}))t_{s_i}.\\
	\end{aligned}
\end{equation}
Subtracting~\eqref{eq:proof_4} from~\eqref{eq:proof_3}, we obtain the time difference as
\begin{align*}
        &\quad \mathbb{E}(T_{\bm{\sigma}^a}) -\mathbb{E}(T_{\bm{\sigma}^b})\\&=\prod_{j=1}^{l-1} (1-p_{s_j})(t_{s_{l}} +  
		(1-p_{s_{l}})t_{s_{l+1}}-t_{s_{l+1}} - (1-p_{s_{l+1}})t_{s_{l}})\\
        &=\prod_{j=1}^{l-1} (1-p_{s_j})(p_{s_{l+1}}t_{s_{l}}-p_{s_{l}}t_{s_{l+1}}).
\end{align*}
Thus, it follows that $\mathbb{E}(T_{\bm{\sigma}^a}) > \mathbb{E}(T_{\bm{\sigma}^b})$ if and only if $\frac{p_{s_l}}{t_{s_l}}<\frac{p_{s_{l+1}}}{t_{s_{l+1}}}$. In other words, if $\frac{p_{s_l}}{t_{s_l}}<\frac{p_{s_{l+1}}}{t_{s_{l+1}}}$, then swapping $s_l$ and $s_{l+1}$ results in a  check order with a lower expected time cost in the narrow phase.
Moreover, for any check order whose elements are not in nonincreasing order of $\frac{p_i}{t_i}$, we can swap neighboring elements to construct a better check order that reduces the expected time cost of the narrow phase, until the optimal check order $\bm{\sigma}^o$ is achieved. \hfill{\IEEEQEDopen}

\subsection{Hypernetwork-based model}

\begin{figure}[!b]
	\centering
	\includegraphics[width=2.8in]{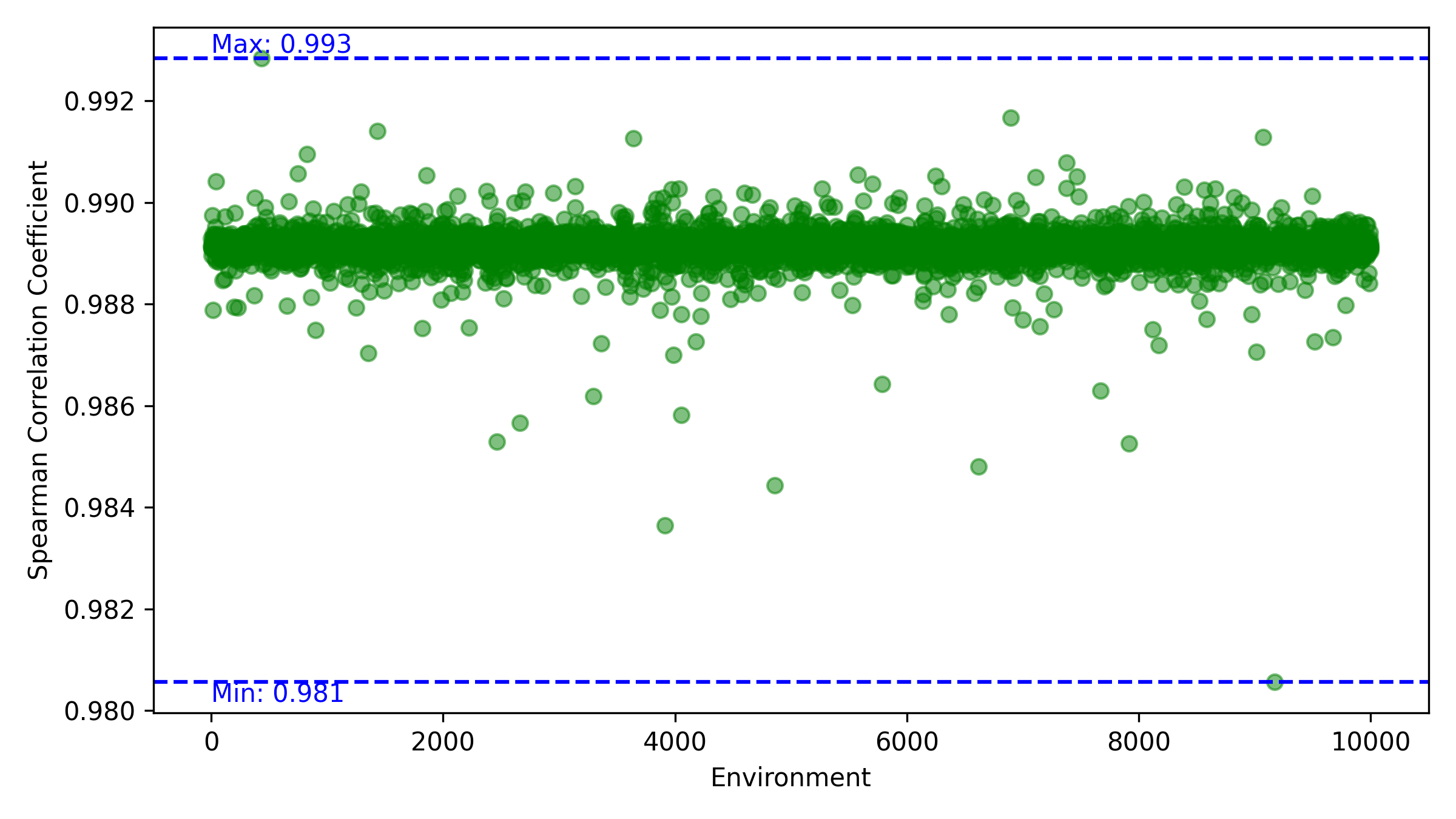}
	\caption{Spearman rank correlation~\cite{spearman1987proof} between the check order obtained by sorting object pairs by $\hat p_i$ in descending order and the optimal check order $\bm{\sigma}^o$. The correlation remains above $0.98$ across $10000$ randomly generated narrow-phase cases on Apple M3.
	}
	\label{fig:data_driven_proof}
\end{figure}

\begin{table}[!b]
	\scriptsize
	\centering
	\caption{
		\textbf{[Validation of Approximating $\hat{p}/t$ with $\hat{p}$] Average variable correlation and order similarity across different CPUs.}
	}
	\label{tab:gpus_proxy}
	\centering
	\begin{tabular}{ccccc}
		\toprule
		\multirow{2}{*}{\textbf{CPU}} & \multicolumn{2}{c}{\makecell[c]{\textbf{Variable correlation}\\ (Pearson coefficient~\cite{benesty2009pearson})}} & \multicolumn{2}{c}{\makecell[c]{\textbf{Order similarity}\\(Spearman rank correlation~\cite{spearman1987proof} )}} \\
		
		& $1/t$ vs. $\hat{p}/t$ & $\hat{p}$ vs. $\hat{p}/t$ & $1/t$ vs. $\hat{p}/t$& $\hat{p}$ vs. $\hat{p}/t$ \\
		\midrule
		Intel i7-11800H & $0.265$ & $0.883$ & $0.605$ & $0.998$\\
		\midrule
		AMD 9950X3D & $0.171$ & $0.799$ & $0.495$ & $0.994$\\
		\midrule
		Apple M3 & $0.323$ & $0.841$ & $0.469$ & $0.997$\\
		\bottomrule
	\end{tabular}
\end{table}

Although the optimal check-ordering criterion is derived in Theorem~\ref{theorem1}, the priors $(p_i,t_i)$ required to construct the optimal order are difficult to obtain in advance. 
We address this issue using a learning-based method, with the following considerations. First, since prediction and check-order construction introduce additional operations into phase-based collision detection, the learning model should maintain a balance between prediction accuracy and computational complexity. Second, the model should generalize well to environments and object meshes beyond those used during training. 
Third, since the check time $t_i$ is difficult to predict accurately and may vary across hardware platforms, we use the predicted collision probability $\hat p_i$ as a practical proxy for the optimal score $p_i/t_i$. This approximation is motivated by the empirical observation in Fig.~\ref{fig:data_driven_proof}, where the check order obtained by sorting object pairs in descending order of $\hat p_i$ is highly similar to the optimal order $\bm{\sigma}^o$ computed using $\hat{p}_i/t_i$, where $\hat p$ is predicted by our model. 
To further validate this approximation,  we collect statistics from extensive computations on different hardware platforms, involving $10000$ narrow phase cases, $100000$ object pairs, and more than $2000$ meshes of varying shapes. The results are summarized in Table~\ref{tab:gpus_proxy}. 
On the one hand, using $\hat p$ as a proxy for $\hat{p}/t$ produces a check order with high correlation and order similarity to that obtained from $\hat{p}/t$. On the other hand, although the distribution of $t_i$ varies across hardware platforms, its variation is  relatively small and does not  substantially change the ordering induced by $\hat{p}/t$. Since our study focuses on the ordering rather than the exact values of $\hat{p}/t$, using $\hat p$ as a practical proxy is reasonable. 

\begin{figure*}[t!]
	\centering
	\includegraphics[scale=0.210]{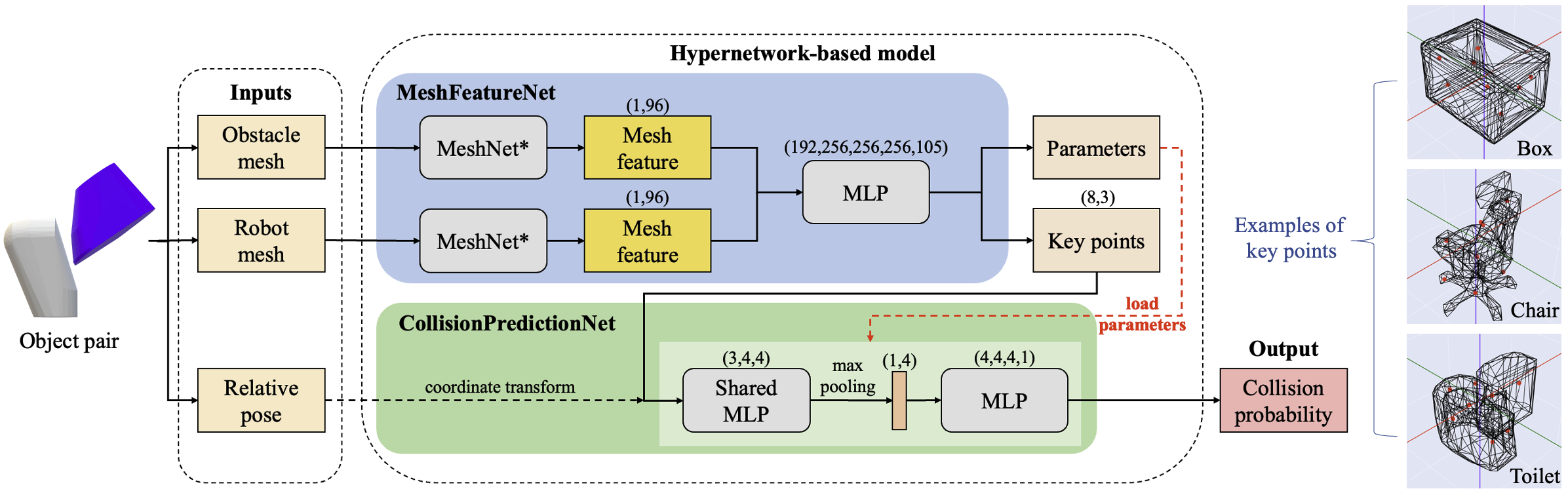}
	\caption{
		Structure of the hypernetwork-based model.
		MeshNet* is a lightweight variant of MeshNet~\cite{2019_AAAI_Feng_Meshnet}, where only one MeshConv layer is retained and the dimensions of its three outputs are reduced to 32 for lower computational complexity. This simplification is feasible because MeshNet* is used for compact mesh-feature extraction rather than the original shape-classification task.
	}
	\label{fig:NetStructure}
\end{figure*}

Then, we propose a hypernetwork-based model to predict the collision probability of each object pair in the narrow phase, as shown in  Fig.~\ref{fig:NetStructure}.
The model consists of a MeshFeatureNet~(MFN) and a CollisionPredictionNet~(CPN), where the network parameters of CPN are generated by MFN.
The rationale behind this design is that  object pairs differ in two aspects: the meshes of the two objects and their relative pose. 
In sampling-based motion planning, the meshes of the robot and obstacles usually remain fixed, while their relative poses change frequently.
As a result, during collision-probability prediction, the component  related to relative poses is invoked much more frequently than the component  related to meshes.
Moreover, since meshes are much more complex than relative poses, using a single shared network to process both mesh geometry and relative pose makes it difficult to balance computational complexity and generalization. 
Most existing collision-prediction methods rely on such shared structure, which often prioritizes low complexity at the cost of generalization~\cite{pan2016fast}~\cite{2020_RAL_Das_fastron_fk}. 
Inspired by hypernetworks~\cite{chauhan2024brief}, which generate the parameters of a target network using another network, we use MFN to generate the parameters of CPN.
In this way, mesh-related information and pose-related information are handled separately, enabling a better balance between efficiency and generalization. Specifically, the two components are designed as follows: 
\begin{itemize}
	\item MeshFeatureNet: MFN uses two MeshNets~\cite{2019_AAAI_Feng_Meshnet} and one Multilayer Perceptron (MLP) to extract features from two input meshes. It outputs the parameters of CPN and a set of key points that compactly encode the geometry of the object pair. 
	In this way, MFN focuses on the mesh-related information and transfers this information to CPN through its outputs. 
	After training, the outputs of MFN are stored in memory, and a library is maintained for object pairs with known meshes. It is worth noting that, although MFN has a more complex architecture than CPN, its forward pass is only required when an object pair with previously unknown meshes appears during prediction.
	
	\item CollisionPredictionNet: CPN loads the parameters generated by  MFN and predicts the collision probability. Although its external input is only the relative pose, the loaded parameters and the key points transformed by the relative pose implicitly encode mesh-related information. Therefore, CPN only needs to model the collision probability under different relative poses for two specific meshes, making it possible to use a compact network structure for efficient prediction.
\end{itemize}
When the environment remains unchanged, the proposed model can directly use the CPN with loaded parameters to make predictions.
When unknown obstacles appear, object pairs with new meshes only need to pass through MFN once to obtain the corresponding CPN parameters and key points. Afterward, predictions for new poses are still made by the loaded CPN.
Therefore, although MFN is more computationally expensive than CPN, its impact on the overall detection time is negligible due to its infrequent use.
As a result, the proposed MFN-CPN structure maintains low computational  complexity while improving generalization.

The hypernetwork-based model is trained in a supervised manner using labeled collision-check samples of the form $\{(m_{\mathrm{rob}_i}, m_{\mathrm{obs}_j}, T_{i,j,k}, c_{i,j,k})\}_{k = 1,\dots,N_{i,j}}$ for each object pair $(i,j)$, where $N_{i,j}$ is the number of relative poses, $m_{\mathrm{rob}_i}$ denotes the  mesh of the $i$-th robot component,  $m_{\mathrm{obs}_j}$ denotes  the mesh of the $j$-th obstacle, $T_{i,j,k}$ is the $k$-th relative pose between the two meshes, and $ c_{i,j,k}\in \{0,1\}$ is the corresponding mesh-check label.
During training, $(m_{\mathrm{rob}_i}, m_{\mathrm{obs}_j}, T_{i,j,k})$ are used as inputs, and $c_{i,j,k}$ is used as the supervised signal. Since the parameters of CPN are generated by MFN, the trainable parameters are those of MFN. The forward propagation of the model is
\begin{equation}
\left\{
\begin{aligned}
	& (\varphi_{i,j}, P_{i,j})  \gets \mathrm{MFN}(m_{\mathrm{rob}_i}, m_{\mathrm{obs}_j} ; \theta), \\
	& P'_{i,j,k} = R_{i,j,k} \cdot P_{i,j} + v_{i,j,k}, \\
	& \hat{p}_{i,j,k} \gets \mathrm{CPN}(P'_{i,j,k}; \varphi_{{i,j}} ),
\end{aligned}
\right.
\end{equation}
where $\theta$ denotes the parameters of MFN, $\varphi_{i,j}$ denotes the generated parameters of CPN, and $P_{i,j}$ denotes the key points generated by the MFN, $P'_{i,j,k}$ denotes the key points transformed according to the relative pose  $T_{i,j,k}$, and $R_{i,j,k}$ and $v_{i,j,k}$ are the rotation matrix and translation vector induced by $T_{i,j,k}$. 
The model is trained using the cross-entropy loss 
\begin{equation}
\label{eq:loss}
		l(\theta) = -\sum_{i,j} \sum_{k=1}^{N_{i,j}}  (c_{i,j,k} \ln \hat{p}_{i,j,k} + (1- c_{i,j,k})\ln(1- \hat{p}_{i,j,k})).
\end{equation}

In summary, the proposed acceleration method for phase-based collision detection in sampling-based motion planning is illustrated in Fig.~\ref{fig:our_method}. 
The method focuses on the narrow phase and accelerates collision detection by adjusting the check order of mesh checks.
First, the hypernetwork-based model predicts the collision probability of each object pair remaining after the broad phase. Then, these object pairs are sorted in descending order of their predicted collision probabilities to approximate the optimal check order.  
Finally, exact mesh checks are performed according to the constructed check order.

\begin{figure}[!h]
	\centering
	\includegraphics[width=3.45in]{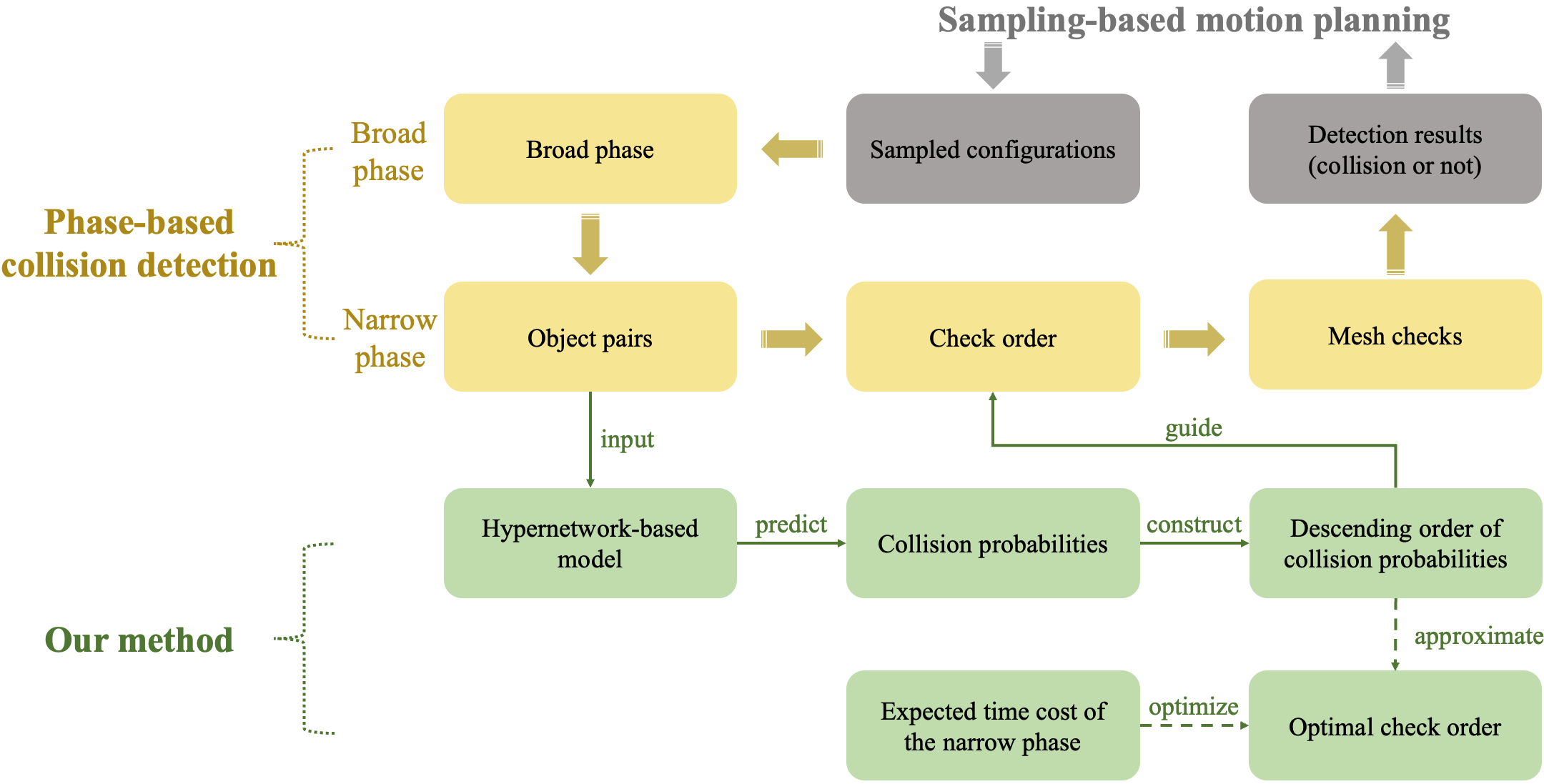}
	\caption{
		The proposed learning-accelerated method for phase-based collision detection in sampling-based motion planning.
	}
	\label{fig:our_method}
\end{figure}

\section{Simulations}
\label{sec:Simulation}
The simulations are built using MoveIt! in Robot Operating System (ROS). The robot model is the Franka Emika Panda, and the obstacle models are selected from ModelNet40~\cite{wu2015ModelNet40}. 
BVH-based collision detection, including both the broad and narrow phases, is implemented using the Flexible Collision Library (FCL)~\cite{2012_ICRA_Pan_fcl} with default parameters in the source code. 
The sampling-based motion planners are implemented using the Open Motion Planning Library (OMPL)~\cite{2012_IRAM_Sucan_ompl}. 
All simulations are conducted on the same computer platform with Ubuntu 20.04, an Intel Core i7-11800H CPU, 32GB RAM, and an NVIDIA RTX 3090 GPU.

For training the hypernetwork-based model, we construct the dataset using $9$ components of the Panda robot model and $252$ obstacle models randomly selected from ModelNet40, forming $9 \times 252$ mesh pairs. For each mesh pair, $100000$ relative poses are randomly generated to create collision-check samples. The ground-truth labels, i.e., collision or non-collision, are obtained by mesh checks using FCL. 
During training, we use the Adaptive Moment Estimation (Adam) optimizer with $1000$  epochs and a learning rate of $10^{-6}$. 
At test time, the mesh information and relative pose, represented by a rotation matrix and a translation vector, are directly available from the simulation environment and used as inputs for network prediction.

\subsection{Performance of Hypernetwork-based Model}

\begin{figure}[!b]
	\centering
	\subfloat[Panda]{\includegraphics[height=1.4in]{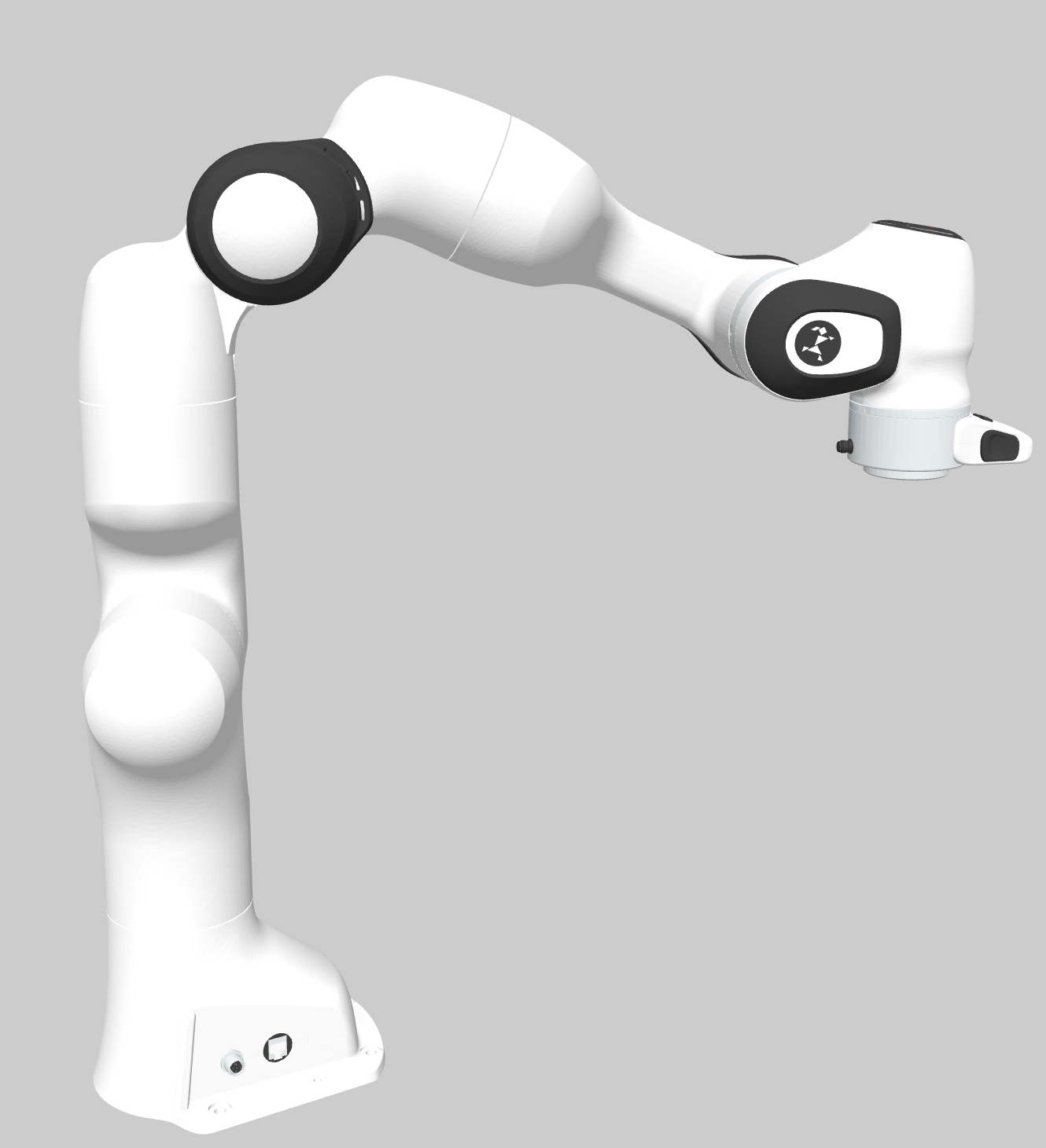}}
	\hspace{0.2in}
	\subfloat[UR5]{\includegraphics[height=1.4in]{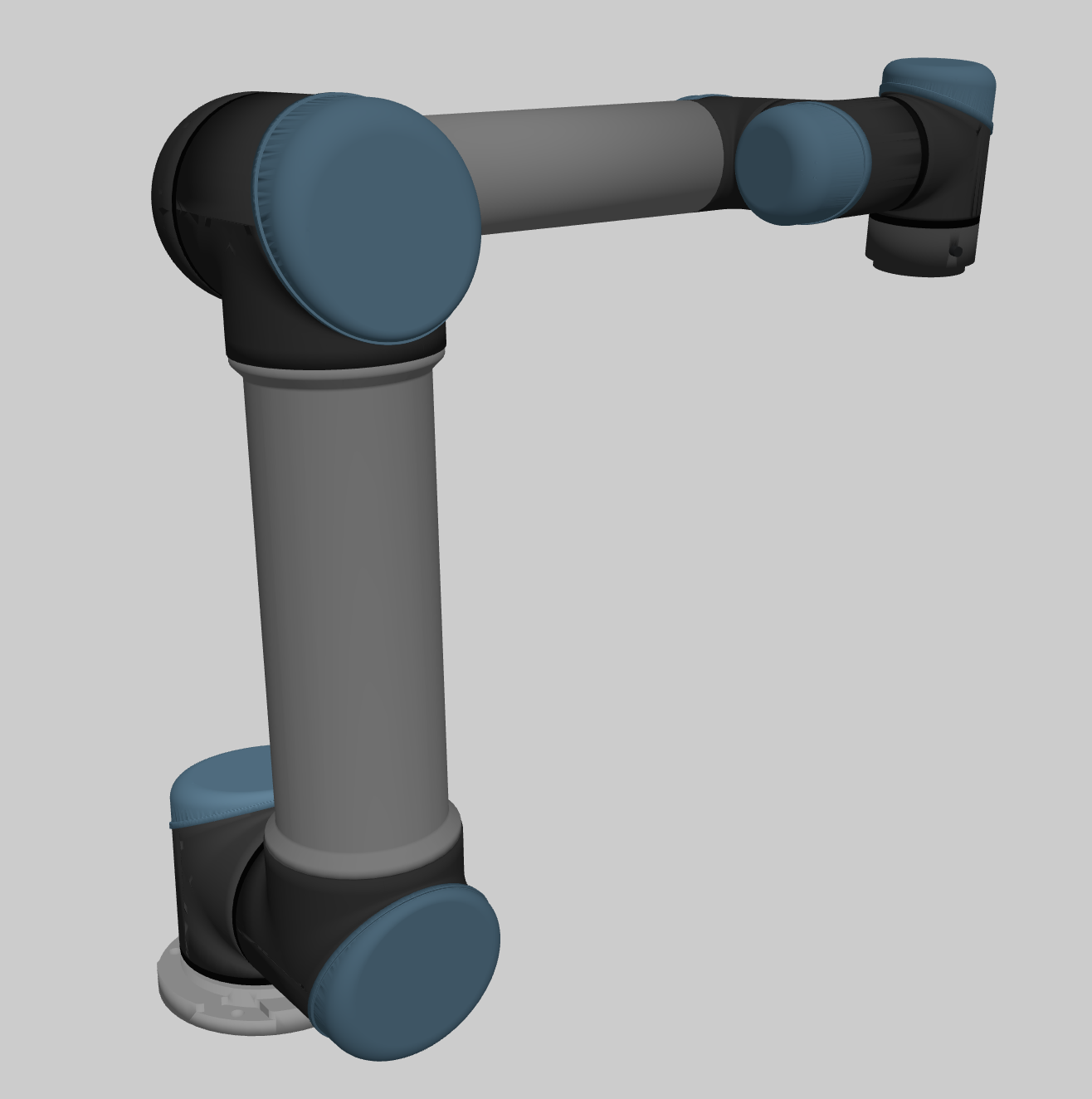}}
	\caption{Simulation models of the robots. 
		The Panda robot is used during training, while the UR5 robot is treated as an unseen robot to further evaluate the generalization capability of the proposed hypernetwork-based model.
	}
	\label{fig:robots}
\end{figure}

\begin{figure}[!b]
	\centering
	\includegraphics[width=3.45in]{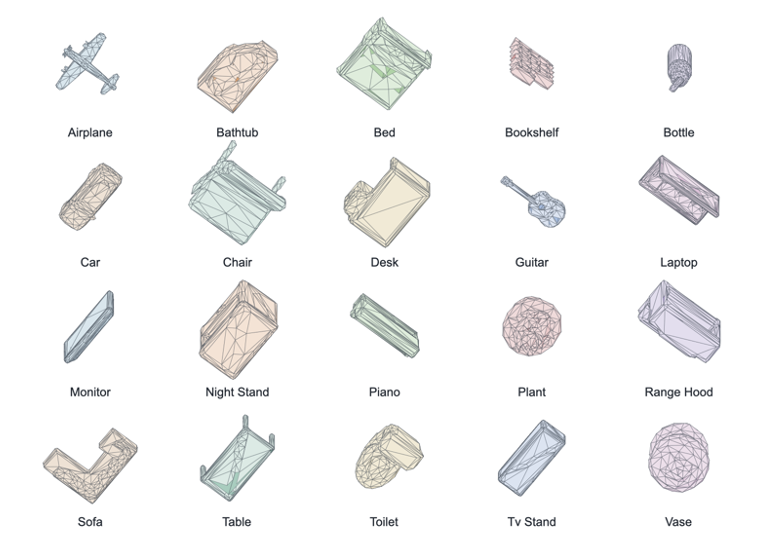}
	\caption{
		Representative meshes from the ModelNet40 dataset.
	}
	\label{fig:modelnet40}
\end{figure}

In this subsection, we evaluate the prediction performance of the hypernetwork-based model, using Fastron~\cite{2020_TRO_Das_fastron} as the baseline. 
Fastron is implemented using the provided code~\cite{2020_TRO_Das_fastron} with default parameters and evaluated on the same testing data as our model.
The testing dataset is divided into two parts. The seen part contains  $2160$ mesh pairs, consisting of $9$ robot components and $240$ obstacles used during training, with $2000$ randomly generated relative poses for each mesh pair. The unseen part contains $432$ mesh pairs, consisting of $9$ robot components and $48$ obstacles  not used during training, with $2000$ randomly generated relative poses for each mesh pair. Each relative pose, together with the corresponding mesh pair, forms a collision-check sample. We compare the proposed model with Fastron in terms of  accuracy, True Positive Rate~(TPR), True Negative Rate~(TNR), and average prediction time. Since the collision label of each sample is binary, the predicted collision probability is binarized using a threshold of $0.5$, where values greater than $0.5$ are classified as collision and values no greater than $0.5$ as non-collision.

The comparison results are shown in Table~\ref{tab:tab_network_validation}: compared with Fastron, the proposed hypernetwork-based model achieves higher prediction accuracy on both seen and unseen object pairs, while requiring less prediction time. This result is consistent with our design motivation of improving generalization while maintaining high accuracy and low computational complexity.
Moreover, the hypernetwork-based model achieves a higher TPR than Fastron, with only a slightly lower TNR. This indicates that our model tends to be more  conservative, prioritizing the detection of potential collisions over the exclusion of non-collision cases. In contrast, Fastron adopts a support-vector-based model, which tends to form larger prediction margins around non-collision samples. Overall, the proposed model demonstrates advantages in prediction accuracy, prediction speed, and generalization ability.
In addition, the results also show that learning-based predictions are not always perfectly accurate. However, in our method, the predicted collision probabilities are used only to optimize the check order in the narrow phase, rather than to replace exact mesh checks. Therefore, prediction errors do not directly change the final collision-detection result, but only affect the efficiency of the checking process. This allows our method to benefit from learning-based prediction while preserving the reliability of the underlying geometric collision detector.
It is also worth noting that, although the hypernetwork-based model is designed for object pairs in the narrow phase, its underlying design idea may inspire future research on collision prediction with more complex inputs, such as predicting collisions involving multiple objects simultaneously.

\begin{figure*}[!b]
	\centering   
	\includegraphics[scale=0.32]{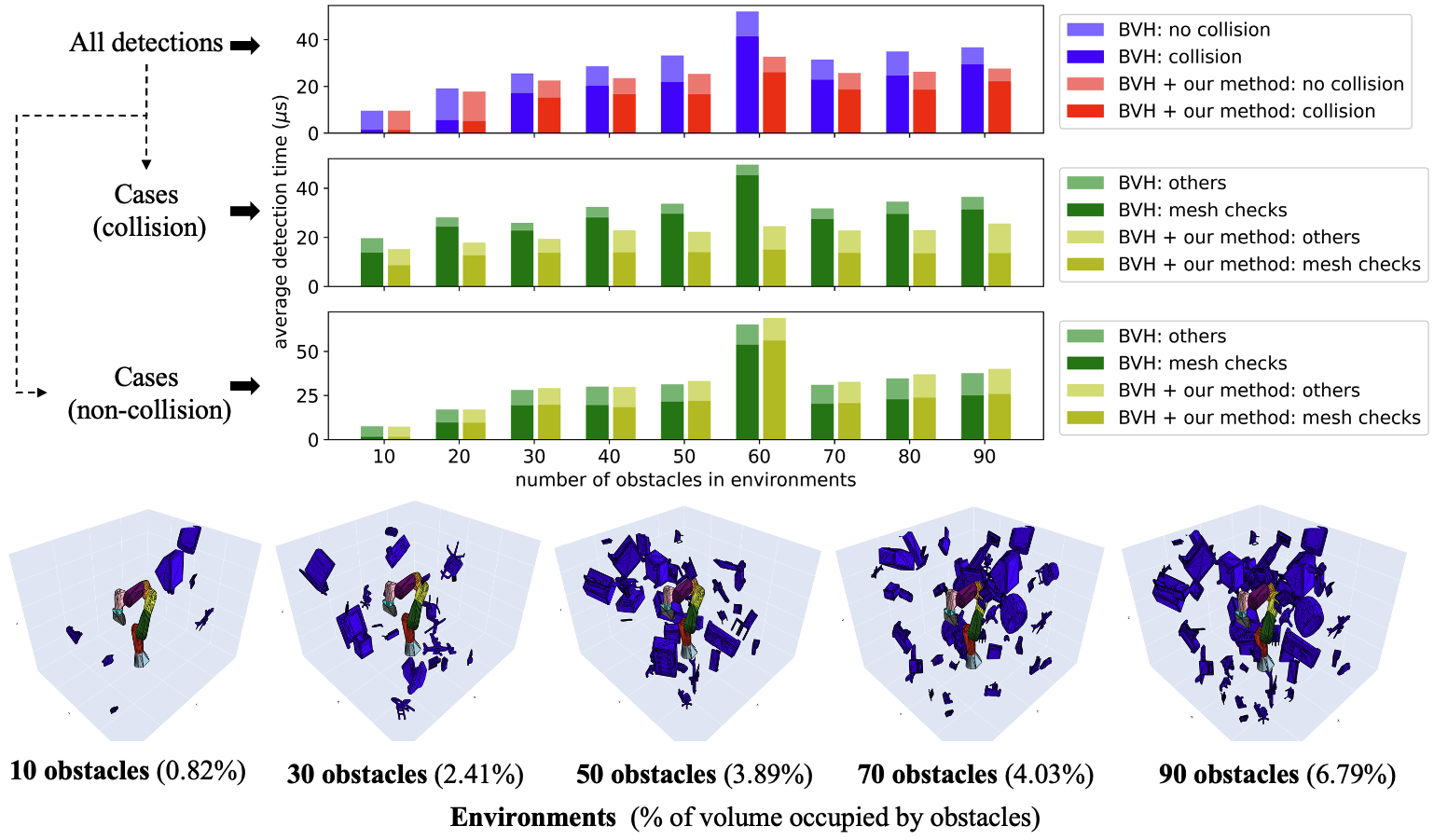}
	\caption{
		Average detection time and its decomposition for all sampled poses in each environment: BVH alone vs. BVH combined with our method.
		The average time is calculated for all detection operations of both approaches, and further separately for the collision and non-collision cases.
		Within the decomposition for collision and non-collision cases: ``mesh checks'' refers to the time cost of executing the narrow phase; 
		for BVH-only, ``others'' corresponds to the time cost of broad phase; 
		for BVH combined with our method, ``others'' includes the time costs of the broad phase,  prediction based on the hypernetwork-based model, and check-order construction based on the predictions.
	}
	\label{fig:obstacles}
\end{figure*}

\begin{table}[!t]
	\scriptsize
	\centering
	\caption{\textbf{[Prediction Comparison on Seen and Unseen Object Pairs]
			\\ The prediction performance of each predictor with Panda}}
	\label{tab:tab_network_validation}
	\centering
	\begin{tabular}{cccccc}
		\toprule
		\textbf{Predictor} &\textbf{Obstacles} & \textbf{Accuracy}&\textbf{TPR}&\textbf{TNR}&\textbf{Time}\\
		\midrule
		\multirow{2}*{Fastron} & seen     & 0.866 & 0.816  & 0.917  &\multirow{2}*{$5.693\,\mu\mathrm{s}$}\\
		& unseen & 0.723 & 0.802 & 0.644  &\\
		\midrule
		\multirow{2}*{ \makecell{Hypernetwork-based \\ model}} & seen     & 0.938 & 0.972  &  0.880 &\multirow{2}*{$1.966\,\mu\mathrm{s}$}\\
		& unseen & 0.926 & 0.965  &  0.861 &\\
		\bottomrule
	\end{tabular}
\end{table}

To evaluate whether the proposed model can generalize to other robot arms without additional training, we test it on an unseen robotic arm, the Universal Robots UR5. Specifically, we use the same obstacle set as that in Table~\ref{tab:tab_network_validation}, but replace the Panda links with the seven links of the UR5 to construct new robot-obstacle mesh pairs. For each mesh pair, we randomly sample $2000$ relative poses to generate collision-check samples for prediction. 
As shown in Table~\ref{tab:new_robot_prediction}, the model effectively predicts collisions for UR5 with both seen and unseen obstacles, demonstrating its generalization capability across environments and robot arms. 

\begin{table}[!t]
	\scriptsize
	\centering
	\caption{\textbf{[Prediction of the UR5 without Additional Training]
			\\ prediction accuracy and runtime in each dataset part }}
	\label{tab:new_robot_prediction}
	\centering
	\begin{tabular}{cccccc}
		\toprule
			\textbf{Robot} &\textbf{Obstacles} & \textbf{Accuracy}&\textbf{TPR}&\textbf{TNR}&\textbf{Time}\\
		\midrule
	\multirow{2}*{UR5 (unseen)}  &  seen & 0.893 & 0.912  & 0.851  & $1.977\,\mu\mathrm{s}$\\
& unseen & 0.878 & 0.907  & 0.823 & $1.993\,\mu\mathrm{s}$\\
		\bottomrule
	\end{tabular}
\end{table}

\subsection{Acceleration on Phase-based Collision Detection}
The proposed method accelerates phase-based collision detection by optimizing the order of exact mesh checks in the narrow phase, without modifying the underlying collision detector. 
This is achieved by using the collision probabilities predicted by the hypernetwork-based model to approximate the optimal check order.
In this subsection, we evaluate whether the proposed method can reduce the time cost of phase-based collision detection.

\begin{table}[!h]
	\scriptsize
	\centering
	\caption{
		\textbf{[Phase-based Collision Detection in the environments]\\Average detection time of Octree-based collision detection  with and without our method}
	}
	\label{tab:octree}
	\centering
	\begin{tabular}{cccc}
		\toprule
		\textbf{Environment} & \textbf{Method} & \textbf{Time} & \makecell[c]{\textbf{Acceleration ratio}\\vs. Octree} \\
		\midrule
		\multirow{2}*{50 obstacles} & Octree & $33.19\,\mu\mathrm{s}$ & -\\
		& Octree + our method & $25.23\,\mu\mathrm{s}$ & $23.98\%$ \\
		\midrule
		\multirow{2}*{70 obstacles} & Octree & $52.01\,\mu\mathrm{s}$ & -\\
		& Octree + our method & $36.68\,\mu\mathrm{s}$ & $29.47\%$ \\
		\bottomrule
	\end{tabular}
\end{table}

First, we construct $9$ simulated environments with different clutter levels, where the number of obstacles ranges from $10$ to $90$. In each environment, the obstacle positions are  randomly generated during initialization and then kept fixed, and we randomly sample $100000$ poses of the Panda robot for collision detection.
For each pose, we perform two collision detection operations: one using BVH alone and the other using BVH combined with our method.
The average detection time and its decomposition in each environment are shown in Fig.~\ref{fig:obstacles}. We have the following observations: 
(\romannumeral1) over all detections, BVH combined  with our method reduces the average detection time compared with BVH alone. The acceleration effect is generally more evident in cluttered environments, achieving a $37\%$ speedup in the scenario with $60$ obstacles. Moreover, as environmental clutteredness increases, a higher proportion of detection time is spent on collision cases. 
(\romannumeral2) in collision cases,  BVH combined with our method incurs  additional costs for probability prediction and  check-order construction, but it reduces the time spent on mesh checks by examining more likely colliding pairs earlier and enabling earlier termination. 
(\romannumeral3) in non-collision cases, all object pairs in the narrow phase must be checked. Therefore,  check-order optimization does not reduce mesh-check time and may slightly increase the total detection time. 
Overall, although our method introduces additional operations, it reduces the narrow-phase mesh-check  time in collision cases by adjusting the check order, thereby accelerating phase-based collision detection. The improvement is substantial in cluttered environments.
Since the proposed method accelerates collision detection by reordering mesh checks in the narrow phase, it can also be applied to other phase-based methods. As shown in Table~\ref{tab:octree}, our method substantially reduces the detection time of Octree-based collision detection in the environments with $50$ and $70$ obstacles shown in Fig.~\ref{fig:obstacles}.

	The generalization capability of the proposed model across robot arms has been evaluated in Sec.~IV-A. We further validate whether the proposed method can accelerate phase-based collision detection for the unseen UR5 without any additional training. The experiments are conducted in two more complex environments containing $100$ and $120$ obstacles. As shown in Table~\ref{tab:new_robot_acc},  our method effectively reduces the detection time for UR5, indicating that the proposed check-ordering strategy can generalize to other robot arms.
\begin{table}[!h]
	\scriptsize
	\centering
	\caption{\textbf{[Collision Detection Time of UR5]
			\\ Average detection time of UR5 with and without our method}}
	\label{tab:new_robot_acc}
	\centering
	\begin{tabular}{cccc}
		\toprule
		\textbf{Environment} & \textbf{Collision detection} & \textbf{Average time} & \makecell[c]{\textbf{Acceleration ratio}\\vs. BVH}\\
		\midrule
		\multirow{2}*{100 obstacles} & BVH   & $245.86\,\mu\mathrm{s}$&  - \\
		& BVH + our method   & $175.15\,\mu\mathrm{s}$ & $28.76\%$  \\
		\midrule
		\multirow{2}*{120 obstacles} &  BVH  & $257.19\,\mu\mathrm{s}$ &  - \\
		& BVH + our method &  $186.99\,\mu\mathrm{s}$  &  $27.29\%$ \\
		\bottomrule
	\end{tabular}
\end{table}

	Finally, we compare our method with the surface area heuristic (SAH) and the distance-based heuristic (DBH), which use geometric metrics as coarse proxies for collision likelihood  to adjust the check order.
	In the environment with $50$ obstacles shown in Fig.~\ref{fig:obstacles}, we randomly resample $1000$ robot poses  and perform collision detection with the two heuristics and our method. 
	The results are shown in Table~\ref{tab:aabb_comparison2}. Both SAH and DBH can  accelerate BVH-based collision detection. However, because these heuristics are simple but coarse, their resulting check orders show noticeable discrepancies from those produced by our model, leading to inferior acceleration performance.

\begin{table}[!h]
	\scriptsize
	\centering
	\caption{\textbf{[Comparison with the heuristic check-ordering methods] \\Time cost in the environment with $50$ obstacles}
	}
	\label{tab:aabb_comparison2}
	\centering
	\begin{tabular}{cccc}
		\toprule
		\textbf{Method} & \textbf{Time} &  \makecell[c]{\textbf{Acceleration ratio}\\vs. BVH} & \makecell[c]{\textbf{Order similarity}\\Spearman coefficient\\ vs. our method}\\
		\midrule 
		BVH & $33.65\,\mu\mathrm{s}$ & - & -\\
		\midrule
		BVH + DBH & $31.19\,\mu\mathrm{s}$ & 7.31\% & 0.587\\
		\midrule
		BVH + SAH & $29.75\,\mu\mathrm{s}$ & 11.58\% & 0.595\\
		\midrule
		BVH + our method& $25.68\,\mu\mathrm{s}$ & 23.69\% &-\\
		\bottomrule
	\end{tabular}
\end{table}

\subsection{Acceleration for Sampling-based Motion Planning}

\begin{table}[!t]
	\scriptsize
	\centering
	\caption{\textbf{[Performance Comparison in Motion Planning]
			\\Average planning time  and success rate of each planner with and without our method}}
	\label{tab:tab2}
	\centering
	\begin{tabular}{cccc}
		\toprule
		\textbf{Motion planner}&\textbf{Collision detection method}&\textbf{Time cost}&\textbf{Success rate}\\
		\midrule
		\multirow{2}*{RRT} & BVH & 1.703 s  & 78.34\% \\
		& BVH + our method & 1.558 s & 81.10\% \\
		\midrule
		\multirow{2}*{RRT-Connect} & BVH & 0.321 s & 98.51\% \\
		& BVH + our method & 0.251 s & 99.46\% \\
		\midrule
		\multirow{2}*{BIT*} & BVH & 0.737 s & 93.02\% \\
		& BVH + our method & 0.635 s & 96.50\% \\
		\bottomrule
	\end{tabular}
\end{table}
As mentioned earlier, sampling-based motion planning involves numerous collision detection operations, which are typically accelerated by phase-based collision detection methods.
In this subsection, we evaluate whether the proposed collision-detection acceleration method can improve the performance of sampling-based motion planners.
Three planners, RRT~\cite{1998_Ames_LaValle_RRT}, RRT-Connect~\cite{2000_ICRA_Kuffner_RRTConnect}, and BIT*~\cite{2015_ICRA_Gammell_BIT*}, are implemented using OMPL.
For each planner, we compare two collision detection settings: BVH alone and BVH combined with our method.
The simulation environment with $50$ obstacles is the same as that shown in Fig.~\ref{fig:obstacles}.
For each planner, we generate $100$ random start-goal configuration pairs in the environment and set a time budget of $5$ seconds for finding a feasible solution. 
Each planner is run $50$ times for each configuration pair and stops when either a feasible solution is found or the time budget is exhausted.

The execution results are shown in Table~\ref{tab:tab2}. For all three planners, combining BVH with our method reduces the average time cost required to find a feasible solution, with RRT-Connect achieving about $22\%$ acceleration. 
Since the proposed method improves the efficiency of phase-based collision detection, the planners can perform more sampling and collision-checking operations within the same  time budget. This enables more thorough exploration of the configuration space, resulting in improved planning success rates for all three planners in the cluttered  environment.

Furthermore, it is worth emphasizing that the planner-level acceleration reported above is achieved under CPU-only inference during planning. GPU support is only required when MFN is invoked to process previously unseen object meshes. After the corresponding CPN parameters are generated, they can be reused for subsequent predictions. This design avoids retraining the entire collision predictor when new object meshes appear.

\section{Conclusion}
\label{sec:conclusion}
In this paper, we propose a learning-based method to accelerate phase-based collision detection in sampling-based motion planning by optimizing the check order in the narrow phase.
First, we formulate the expected time cost of the narrow phase and derive an optimal check-ordering criterion that minimizes this expectation. Second, we design a hypernetwork-based model to learn and predict the priors  required to construct an approximate optimal check order, while maintaining low computational complexity and good generalization capability. 
Simulation results demonstrate that the proposed method substantially reduces the time cost of phase-based collision detection in cluttered environments, and further improves the efficiency and success rate of sampling-based motion planners.

In practical applications, the proposed method can be seamlessly integrated into existing phase-based collision detection frameworks, since it only adjusts the check order in the narrow phase without modifying the underlying collision detector. This non-intrusive design enables straightforward deployment in a wide range of sampling-based motion planning systems with minimal implementation effort. Furthermore, since collision detection often dominates the computational cost of motion planning in cluttered environments, the proposed acceleration strategy has the potential to provide substantial performance gains in real-world robotic applications that require frequent collision checking, such as manipulation, navigation, and motion planning in confined spaces.


\bibliographystyle{IEEEtran}
\bibliography{thesis.bib}
\end{document}